\documentclass[10pt,twocolumn,letterpaper]{article}

\usepackage{cvpr}      
\usepackage{amsmath, amssymb, amsthm}
\usepackage{algorithm}
\usepackage{algcompatible}
\usepackage{bbm}

\newtheorem{assumption}{Assumption}
\newtheorem{theorem}{Theorem}

\newcommand{\ours}{{FineMoLA}\xspace}
\definecolor{cvprblue}{rgb}{0.21,0.49,0.74}
\usepackage[pagebackref,breaklinks,colorlinks,allcolors=cvprblue]{hyperref}

\usepackage{listings}
\usepackage{xcolor}
\usepackage{hyperref}
\usepackage{mdframed}

\def\paperID{8} 
\def\confName{CVPR}
\def\confYear{2026}

\title{FineMoLA: Towards Fine-Grained Motion-Language Alignment from Clip-Level Supervision}

\author{
    Tongyan Wang$^{1}$\thanks{Equal contribution.}, 
    Zhengyuan Li$^{1}$\footnotemark[1], 
    Muhan Lin$^{1}$, 
    Shengyang Luo$^{1}$, 
    Yifan Shen$^{2}$, \\
    Aniket Bera$^{1}$,  
    Baijian Yang$^{1}$,  
    Yingjie Victor Chen$^{1}$ \\
    $^{1}$Purdue University \quad $^{2}$University of Illinois Urbana-Champaign \\
    {\tt\small \{wang5298, li5280, lin2265, luo525\}@purdue.edu,} 
    {\tt\small yifan26@illinois.edu} \\
    {\tt\small \{aniketbera, byang, victorchen\}@purdue.edu}
}

\begin{document}
\maketitle
\begin{abstract}
Text-conditioned human motion generation has made rapid progress with the emergence of large-scale motion--language datasets. However, even datasets with rich long-form descriptions typically provide supervision only at the clip level, without explicit temporal correspondence between motion frames and language. This limits fine-grained motion--text grounding and temporally precise generation. We propose \ours, a weakly supervised framework that learns fine-grained frame--phrase correspondence directly from clip-level annotations. Our method first segments long-form descriptions into action-bearing phrases, and then formulates motion--language alignment as an optimal transport problem, which naturally models many-to-many relations between motion frames and text under global constraints. With entropic regularization and Sinkhorn iterations, \ours efficiently infers pseudo frame-level alignments without human labeling. Experiments on SnapMoGen demonstrate that the learned alignments outperform baselines in motion--text grounding. Website: \url{https://hellosunnyworld.github.io/finemola/}.

\end{abstract}    

\section{Introduction}

Recent years have witnessed rapid progress in text-conditioned 3D human motion generation, driven by advances in diffusion~\cite{tevet2022human} and transformer-based modeling~\cite{guo2022generating,guo2024momask}.
These models can synthesize diverse and realistic motion sequences from natural language prompts, enabling applications in character animation, AR/VR, embodied agents, and human--computer interaction.

A key catalyst behind recent progress is the emergence of large-scale motion--language datasets.
Early benchmarks such as KIT-ML~\cite{Plappert2016} and HumanML3D~\cite{guo2022generating} established the standard text-to-motion setting with paired clip-level captions and standardized evaluation.
However, these captions are typically short and describe an entire motion clip coarsely, limiting fine-grained controllability and semantic grounding. More recent datasets~\cite{han2024amd,wang2024scaling,punnakkal2021babel,xudense,li2026frankenmotion} broaden the supervision spectrum with richer long-form descriptions, hierarchical text annotations, frame-level action labels, and temporally localized captions. Together, these developments highlight a growing demand for \emph{fine-grained temporal correspondence} between motion and language, especially for long-form narratives and multi-action descriptions.
\begin{figure*}[h]
    \centering
\includegraphics[width=\linewidth]{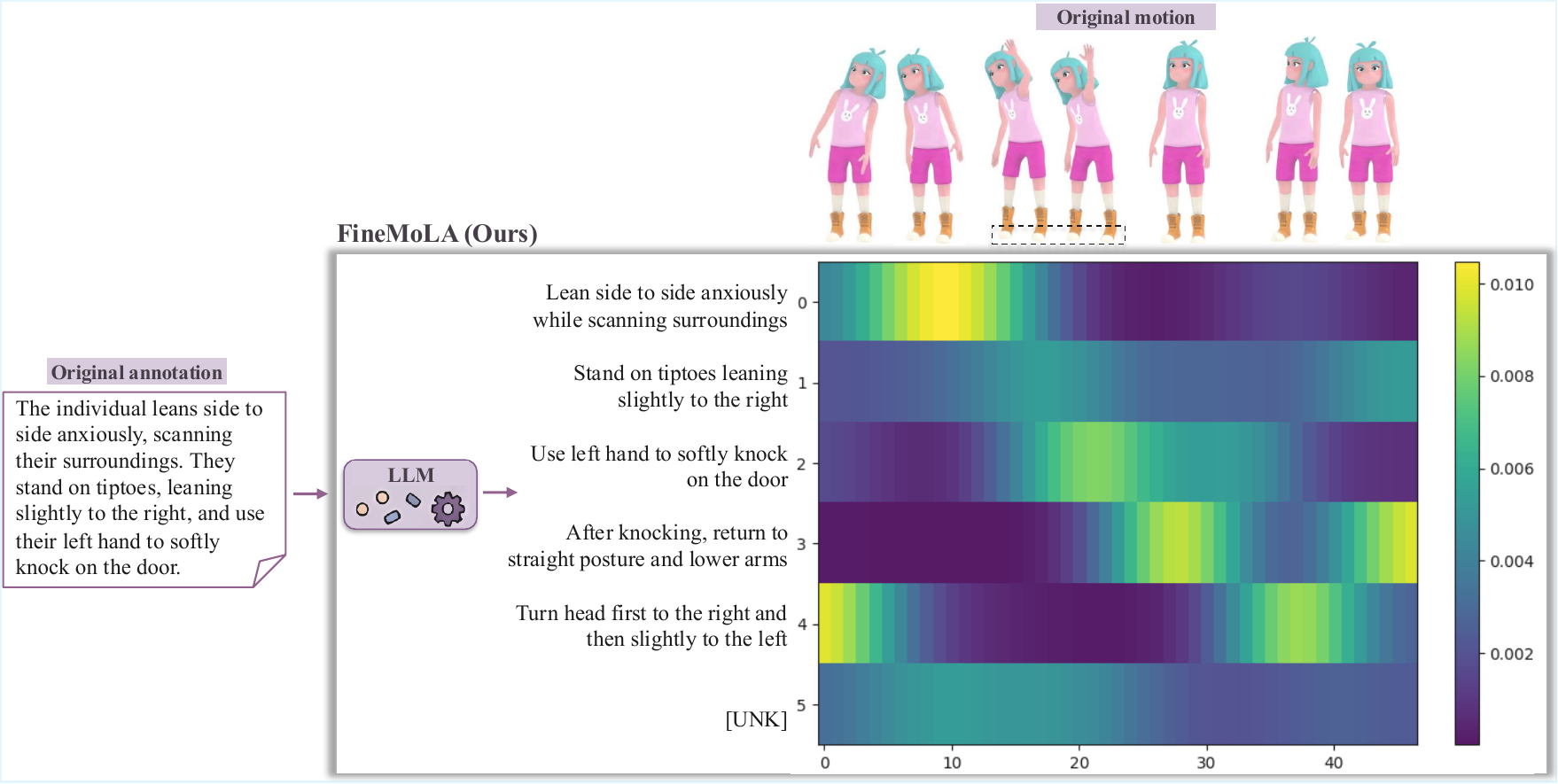}
    \caption{While the SnapMoGen dataset provides human motions with long-form text annotations, \ours finds the fine-grained alignment matrix between motion frames and LLM-detected action-bearing phrases in the text without manual labeling. Our method includes an \texttt{[UNK]} token to absorb frames that are not explicitly described by any specific phrase. The dashed black box highlights that the character does stand on tiptoes in those frames.}
    \label{fig:teaser}
\end{figure*}

Among existing datasets, SnapMoGen~\cite{guo2025snapmogen} stands out for its high-quality motion–language supervision. Its motions are collected using professional mocap suits, and its annotations are primarily written by human annotators. In contrast to earlier datasets that rely on short action labels or simple captions, SnapMoGen provides long-form descriptions that often cover multiple actions and transitions within a single motion clip. Such expressive supervision offers a strong foundation for studying fine-grained motion–text correspondence. However, the annotations remain at the clip level, without explicit frame-to-token alignment between motion and language. Learning such fine-grained correspondence could benefit motion understanding tasks such as temporal localization and dense motion captioning, and provide stronger supervision for fine-grained text-to-motion generation with temporal structure~\cite{wu2025mg,xudense,zhang2023finemogen}. At the same time, obtaining dense open-vocabulary alignment at scale remains challenging due to the inherently many-to-many relationships between words and motion over time.

As illustrated in \cref{fig:teaser}, we aim to recover fine-grained frame--phrase correspondence from long-form clip-level annotations without requiring manual temporal labels. To bridge this gap, we propose a weakly supervised framework that formulates frame--token correspondence as an optimal transport (OT) problem \cite{cuturi2013sinkhorn,peyre2019computational}.
Optimal transport naturally models many-to-many soft alignments under global marginal constraints, making it well-suited for motion-text grounding. To enable efficient and differentiable learning, we adopt entropic regularization and solve the resulting transport plan using Sinkhorn iterations \cite{cuturi2013sinkhorn}.
This formulation aligns with a broader trend in multimodal learning where fine-grained token-level alignment objectives improve cross-modal consistency and robustness \cite{yao2021filip,zhang2025pre,linmulti}.
Leveraging abundant clip-level supervision in SnapMoGen, our method produces pseudo frame-level alignments that support temporally grounded motion--language modeling.

To summarize, our contributions are as follows:
\begin{itemize}
\item We introduce \ours, a weakly supervised framework that learns fine-grained motion--language correspondence from clip-level annotations. By formulating frame-to-token alignments as an optimal transport problem, our method learns fine-grained correspondences for long-form textual descriptions, enabling dense motion--text grounding without requiring manual frame-level labels.
\item Experiments on the SnapMoGen dataset demonstrate that the learned alignments provide stronger motion--text grounding than baselines.
\end{itemize}

\section{Related Work}
\subsection{Human Motion Datasets with Text Annotation}

The rapid progress of text-driven human motion modeling has been closely tied to the emergence of motion datasets with language annotation~\cite{bensabath2024cross,wu2025mg,wang2024scaling,guo2025snapmogen}. Early benchmarks such as KIT-ML~\cite{Plappert2016} and HumanML3D~\cite{guo2022generating} established the standard text-to-motion setting by providing paired motion clips and clip-level natural language descriptions, enabling shared embedding learning, retrieval-based training, and standardized evaluation~\cite{petrovich2023tmr}. 

Subsequent datasets broadened the supervision spectrum beyond coarse clip-level captions. BABEL~\cite{punnakkal2021babel} augments mocap sequences with sequence- and frame-level action labels, supporting temporally localized understanding. Motion-X~\cite{lin2023motion} provides expressive whole-body motion together with sequence-level semantics and video-derived frame-level pose descriptions. More recently, SnapMoGen~\cite{guo2025snapmogen} introduces long-form narrative descriptions with improved temporal continuity, while MotionLib~\cite{wang2024scaling} scales motion--text supervision to millions of samples with hierarchical annotations. Dense Motion Captioning further introduces CompMo, a large-scale dataset with temporally bounded multi-action sequences and grounded captions~\cite{xudense}. In parallel, FrankenMotion provides temporally aware part-level prompts for compositional motion modeling~\cite{li2026frankenmotion}.

Despite these advances, most motion--language datasets still provide supervision either at the clip level or through limited predefined labels, leaving many-to-many frame-level (or token-to-frame) correspondence under-explored. 

This gap motivates learning objectives that explicitly model fine-grained temporal correspondence between motion and language~\cite{xudense,li2025unimotion}, complementary to efforts that focus on stronger generators or clip-level supervision~\cite{tevet2022human,zhang2023generating,guo2024momask,meng2025rethinking}.

\subsection{Human Motion Understanding}
Human motion understanding has long been studied in computer vision through video-based tasks such as action recognition and temporal localization, where models align motion with action labels or textual descriptions~\cite{bertasius2021space,arnab2021vivit,nag2022proposal,feichtenhofer2022masked}. In contrast, our work focuses on structured human motion data (e.g., mocap sequences), where the temporal evolution of poses is directly available and can be paired with language supervision.

In the motion domain, early understanding models largely relied on shared embedding learning and retrieval-style objectives. MotionCLIP~\cite{tevet2022motionclip} aligns motion latents with the CLIP~\cite{radford2021learning} embedding space to exploit strong language priors, while TMR~\cite{petrovich2023tmr} combines contrastive learning with a motion synthesis objective for text--motion retrieval. Cross-dataset studies~\cite{bensabath2024cross} further show that such retrieval performance is sensitive to shifts in motion distributions and annotation styles, motivating robustness and transfer-aware evaluation.

More recent works move beyond global alignment toward fine-grained motion--language understanding. Dense Motion Captioning~\cite{xudense} requires temporally localizing and describing multiple actions within long motion sequences, while unified motion--language models such as MG-MotionLLM~\cite{wu2025mg} and UniMotion~\cite{li2025unimotion} further connect motion understanding, localization, and generation. At the same time, advances in generation and control, including ReMoDiffuse~\cite{zhang2023remodiffuse}, FineMoGen~\cite{zhang2023finemogen}, CoMo~\cite{huang2024controllable}, and recent text-conditioned generators~\cite{zhang2023generating,guo2024momask,meng2025rethinking,Zhao:DartControl:2025,barquero2024seamless,sun2024coma,jiang2023motiongpt,pinyoanuntapong2024mmm,dang2026segmo}, highlight the growing demand for fine-grained temporal correspondence between motion and language. Our work addresses this need by learning dense motion--text alignment from clip-level supervision.

\subsection{Semi-supervised Learning Using Optimal Transport}
Optimal transport (OT) provides a principled tool to establish soft correspondences between two sets of elements under marginal constraints, and has been widely used for matching and alignment problems~\cite{solomon2018optimal,peyre2019computational,cuturi2013sinkhorn}. Recent work has incorporated OT into representation learning and robust alignment.
For instance, OT-based matching has been used to mitigate data poisoning in CLIP-style~\cite{radford2021learning} pre-training by aligning samples through transport plans~\cite{zhang2025pre}.
OT has also been applied to understand and generalize CLIP by modeling relationships between sets of features via transport~\cite{shi2024ot, yao2021filip}. In this work, we focus on \emph{fine-grained correspondences} in the motion-language domain.

\section{Preliminary}
\subsection{Optimal Transport}
Optimal Transport (OT)~\cite{solomon2018optimal} provides a principled framework for measuring the
discrepancy between two distributions while explicitly accounting for the
underlying geometry. Assume two uniform discrete distributions $\mathbf{a}$ and $\mathbf{b}$, supported on $T_m$ and $T_t$ elements, respectively. We denote the set of admissible transport plans by
\begin{equation}
\label{eq:P_constraint}
U(\mathbf{a},\mathbf{b})=
\left\{
\mathbf{P} \in \mathbb{R}_+^{T_m \times T_t}
\;\middle|\;
\mathbf{P}\mathbf{1}_{T_t}=\mathbf{a},\;
\mathbf{P}^\top \mathbf{1}_{T_m}=\mathbf{b}
\right\},
\end{equation}
where
\begin{equation}
\mathbf{a}=\frac{1}{T_m}\mathbf{1}_{T_m},
\qquad
\mathbf{b}=\frac{1}{T_t}\mathbf{1}_{T_t}.
\end{equation}

The marginal constraints ensure that $\mathbf{P}$ redistributes the mass of $\mathbf{a}$ onto $\mathbf{b}$.
Given a cost matrix $\mathbf{C}\in\mathbb{R}^{T_m\times T_t}$ measuring the cost of
transporting unit mass between elements, the Kantorovich formulation of OT is
defined as
\begin{equation}
\label{eq:ot_kantorovich}
\min_{\mathbf{P}\in{U}(\mathbf{a},\mathbf{b})}
\ \langle \mathbf{P},\mathbf{C}\rangle
\end{equation}
where $\langle \mathbf{P},\mathbf{C}\rangle=\sum_{i,j}\mathbf{P}_{ij}\mathbf{C}_{ij}$.

In contrast to pointwise matching, OT naturally supports many-to-many
correspondence and enforces global consistency via marginal constraints.
These properties make OT particularly suitable for modeling fine-grained
alignment under weak supervision, such as motion--text correspondence.

\subsection{Entropic-Regularized Optimal Transport and Sinkhorn Algorithm}
\label{sec:entro}

Directly solving \cref{eq:ot_kantorovich} requires linear programming, which is
computationally expensive and unsuitable for end-to-end learning.
To address this issue, entropic regularization augments the OT objective with an
entropy term:
\begin{equation}
\label{eq:entropic_ot}
\begin{aligned}
\mathbf{P}^* = \arg\min_{\mathbf{P}\in{U}(\mathbf{a},\mathbf{b})}
\ & \langle \mathbf{P},\mathbf{C}\rangle
\;+\;
\varepsilon
\sum_{i,j}{P}_{ij}\big(\log{P}_{ij}-1\big)
\end{aligned}
\end{equation}
where $\varepsilon>0$ controls the strength of the regularization. The entropic regularization renders the problem strictly convex and enables efficient optimization via the Sinkhorn algorithm.

\section{Method}
Suppose we are given a paired motion--text dataset $\mathcal{D}=\{(m^{(j)}, t^{(j)})\}_{j=1}^{N}$.
For simplicity, we omit the sample index $(j)$ when there is no ambiguity.
Each sample consists of a motion sequence $m \in \mathbb{R}^{T_m' \times D_m}$ and a text sequence $t \in \mathbb{R}^{T_t' \times D_t}$. Our goal is to learn fine-grained motion--text alignment.
We model the alignment between a motion sequence and a text sequence as an optimal transport plan
$\mathbf{P}$,
where each entry $P_{uv}$ measures the semantic correspondence between the $u$-th motion frame and the $v$-th text token. We define the transport plan in \cref{sec:ot} and then provide an intuitive analysis of this formulation in \cref{sec:case}. Finally, we describe the feature extraction process in \cref{sec:network} and the CLIP-style contrastive training objective in \cref{sec:training}.

\subsection{Optimal Transport Cost}
\label{sec:ot}
Our model consists of a motion feature extractor $E_m$ and a text feature extractor $E_t$.
Given a motion sequence $m$ and a text sequence $t$, $E_m$ produces a sequence of fine-grained motion features
\begin{equation}
Z^m=\{z^m_1,\ldots,z^m_{T_m}\},
\end{equation}
and $E_t$ produces a sequence of text features
\begin{equation}
Z^t=\{z^t_1,\ldots,z^t_{T_t}\}.
\end{equation}
Note that $T_m\neq T_m'$ and $T_t\neq T_t'$ due to down-sampling in the encoders. Let $s(\cdot,\cdot)$ denote the cosine similarity between $\ell_2$-normalized features.

We define the transport cost matrix $\mathbf{C}\in\mathbb{R}^{T_m\times T_t}$ by
\begin{equation}
\label{eq:cost_matrix}
\mathbf{C}_{uv}=1-s(z^m_u,z^t_v).
\end{equation}
Intuitively, a smaller cost indicates a stronger semantic correspondence between the $u$-th motion frame and the $v$-th text token.

Based on the admissible transport set $U(\mathbf{a},\mathbf{b})$ defined in the previous subsection, we define the OT cost for a motion--text pair $(m,t)$ as
\begin{equation}
\label{eq:ot_single}
\mathcal{J}_{\mathrm{OT}}(m,t)
=
\min_{\mathbf{P}\in U(\mathbf{a},\mathbf{b})}
\langle \mathbf{P},\mathbf{C}\rangle.
\end{equation}

This OT cost measures the minimum transport cost required to align the motion and text features under the marginal constraints.
A lower value of $\mathcal{J}_{\mathrm{OT}}(m,t)$ indicates better fine-grained motion--text alignment. As in \cref{sec:entro}, we follow \cref{eq:entropic_ot} to obtain approximated optimal transport plan $\mathbf{P}'$. The estimated optimal transport cost is 
\begin{equation}
\label{eq:eot}
\mathcal{J}_{\mathrm{EOT}} = \langle \mathbf{P}',\mathbf{C}\rangle
\end{equation}
Although $\mathbf{P}'$ is a function of $\mathbf{C}$, we treat it as a constant during backpropagation for simplicity. Under this approximation, $\mathcal{J}_{\mathrm{EOT}}$ remains differentiable with respect to $\mathbf{C}$.

\subsection{Analysis: a Case Study}
\label{sec:case}
To better understand how the proposed objective encourages fine-grained alignment, we consider an idealized case in which each motion frame is described by exactly one text token.

\begin{assumption}[Perfect Frame--Token Correspondence]
\label{asmp:perfect_alignment}
Assume that the motion and text sequences have equal lengths, i.e., $T_m=T_t$.
Further assume that there exist encoder parameters $\theta_m$ and $\theta_t$ such that, for each paired sample $(m,t)$, there exists a bijection
$\pi:\{1,\ldots,T_m\}\rightarrow\{1,\ldots,T_t\}$ satisfying
\begin{equation}
s\!\left(z^m_u,z^t_{\pi(u)}\right)=1,
\quad \forall u\in\{1,\ldots,T_m\},
\end{equation}
where $z^m_u$ and $z^t_v$ denote the $u$-th and $v$-th output features of $E_m(m;\theta_m)$ and $E_t(t;\theta_t)$, respectively.
\end{assumption}

\begin{theorem}[Zero OT Cost under Perfect Alignment]
\label{thm:zero_ot}
Under Assumption~\ref{asmp:perfect_alignment}, the OT objective $\mathcal{J}_{OT}(m,t)$ admits a feasible transport plan with zero cost. Specifically, the transport plan defined by
\begin{equation}
P_{uv}=\frac{1}{T_m}\mathbbm{1}\{v=\pi(u)\}
\end{equation}
is feasible and achieves
\begin{equation}
\mathcal{J}_{OT}(m,t)=0.
\end{equation}
\end{theorem}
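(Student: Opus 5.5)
The plan is to verify the two claims in Theorem~\ref{thm:zero_ot} directly: first that the proposed plan $\mathbf{P}$ lies in $U(\mathbf{a},\mathbf{b})$ from \cref{eq:P_constraint}, then that its cost $\langle\mathbf{P},\mathbf{C}\rangle$ vanishes. The final step combines these with a lower bound showing that no feasible plan can have negative cost, so the minimum in \cref{eq:ot_single} equals $0$.

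\textbf{Feasibility.} Nonnegativity of $P_{uv}=\frac{1}{T_m}\mathbbm{1}\{v=\pi(u)\}$ is immediate. For the row marginals, fix $u$. Exactly one index $v$, namely $v=\pi(u)$, contributes, so $(\mathbf{P}\mathbf{1}_{T_t})_u=\frac{1}{T_m}=a_u$. For the column marginals, fix $v$. Because $\pi$ is a bijection, exactly one $u$ satisfies $\pi(u)=v$, namely $u=\pi^{-1}(v)$. Hence $(\mathbf{P}^\top\mathbf{1}_{T_m})_v=\frac{1}{T_m}=\frac{1}{T_t}=b_v$, using $T_m=T_t$ from Assumption~\ref{asmp:perfect_alignment}. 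This column-marginal step is the only place where both bijectivity and equal lengths are needed, so I expect it to be the main point requiring care, although it remains elementary.

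\textbf{Zero cost.} By \cref{eq:cost_matrix} and the assumption $s(z^m_u,z^t_{\pi(u)})=1$, we have $\mathbf{C}_{u\pi(u)}=0$ for every $u$. Therefore
\begin{equation*}
\langle\mathbf{P},\mathbf{C}\rangle=\sum_u \frac{1}{T_m}\,\mathbf{C}_{u\pi(u)}=0.
\end{equation*}

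\textbf{Optimality.} Cosine similarity satisfies $s\le 1$, so every entry of $\mathbf{C}$ is nonnegative. Every $\mathbf{Q}\in U(\mathbf{a},\mathbf{b})$ is also entrywise nonnegative, so $\langle\mathbf{Q},\mathbf{C}\rangle\ge 0$. Thus $0$ is a lower bound on the objective, and $\mathbf{P}$ attains it. It follows that $\mathcal{J}_{OT}(m,t)=0$.
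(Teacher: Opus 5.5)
Your proposal is correct and follows the paper's proof essentially step for step: feasibility from the bijection together with $T_m=T_t$, zero cost from $\mathbf{C}_{u\pi(u)}=0$, and optimality from a nonnegativity lower bound. Your explicit remark that $s\le 1$ forces $\mathbf{C}\ge 0$, and hence $\langle\mathbf{Q},\mathbf{C}\rangle\ge 0$ for every feasible $\mathbf{Q}$, spells out what the paper compresses into the phrase ``which is the minimum possible value.''
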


\begin{proof} See supplementary.
\end{proof}

In practice, motion-text alignment is generally many-to-many rather than bijective. Nevertheless, \cref{thm:zero_ot} shows that the proposed objective is consistent with perfect fine-grained alignment in an idealized setting, which helps explain why minimizing $\mathcal{J}_{\mathrm{OT}}$ can encourage meaningful motion--text correspondence in practice.

\begin{algorithm}
\caption{CLIP-style Motion--Text Training with Sinkhorn-based Alignment}
\label{alg:ot_motion_text}
\begin{algorithmic}[1]
\REQUIRE Paired dataset $\mathcal{D}=\{(m^{(j)},t^{(j)})\}_{j=1}^N$;
motion feature extractor $E_m(\cdot;\theta_m)$; text feature extractor $E_t(\cdot;\theta_t)$
\ENSURE Trained parameters $\theta_m,\theta_t$

\FOR{each training epoch}
    \FOR{each mini-batch $\{(m^{(i)},t^{(i)})\}_{i=1}^B$}
        \FOR{$i=1$ to $B$}
            \STATE $Z^{m,(i)} \leftarrow E_m(m^{(i)};\theta_m)$
            \STATE $Z^{t,(i)} \leftarrow E_t(t^{(i)};\theta_t)$
            \STATE $\ell_2$-normalize all feature vectors in $Z^{m,(i)}$ and $Z^{t,(i)}$
        \ENDFOR

        \FOR{$i=1$ to $B$}
            \FOR{$j=1$ to $B$}
                \STATE \textcolor{green!50!black}{Construct the cost matrix $\mathbf{C}^{(i,j)}$ from $Z^{m,(i)}$ and $Z^{t,(j)}$ according to \cref{eq:cost_matrix}}
                \STATE \textcolor{green!50!black}{Compute the Sinkhorn transport plan $P^{(i,j)} \in U(\mathbf{a},\mathbf{b})$ according to \cref{eq:entropic_ot}.} 
                \STATE \textcolor{green!50!black}{Compute the pairwise alignment logit $L_{ij} = -\mathcal{J}_{\mathrm{EOT}}(m^{(i)}, t^{(j)})$ using \cref{eq:ot_single}}
            \ENDFOR
        \ENDFOR

        \STATE Compute motion-to-text probabilities $p(t^{(j)}\mid m^{(i)})$ by row-wise softmax over $L$
        \STATE Compute text-to-motion probabilities $p(m^{(i)}\mid t^{(j)})$ by column-wise softmax over $L$
        \STATE Compute the symmetric contrastive loss $\mathcal{L}=\frac{1}{2}(\mathcal{L}_{m2t}+\mathcal{L}_{t2m})$
        \STATE Update $\theta_m,\theta_t$ by minimizing $\mathcal{L}$
    \ENDFOR
\ENDFOR
\end{algorithmic}
\end{algorithm}

\begin{figure*}
    \centering
    \includegraphics[width=0.95\linewidth]{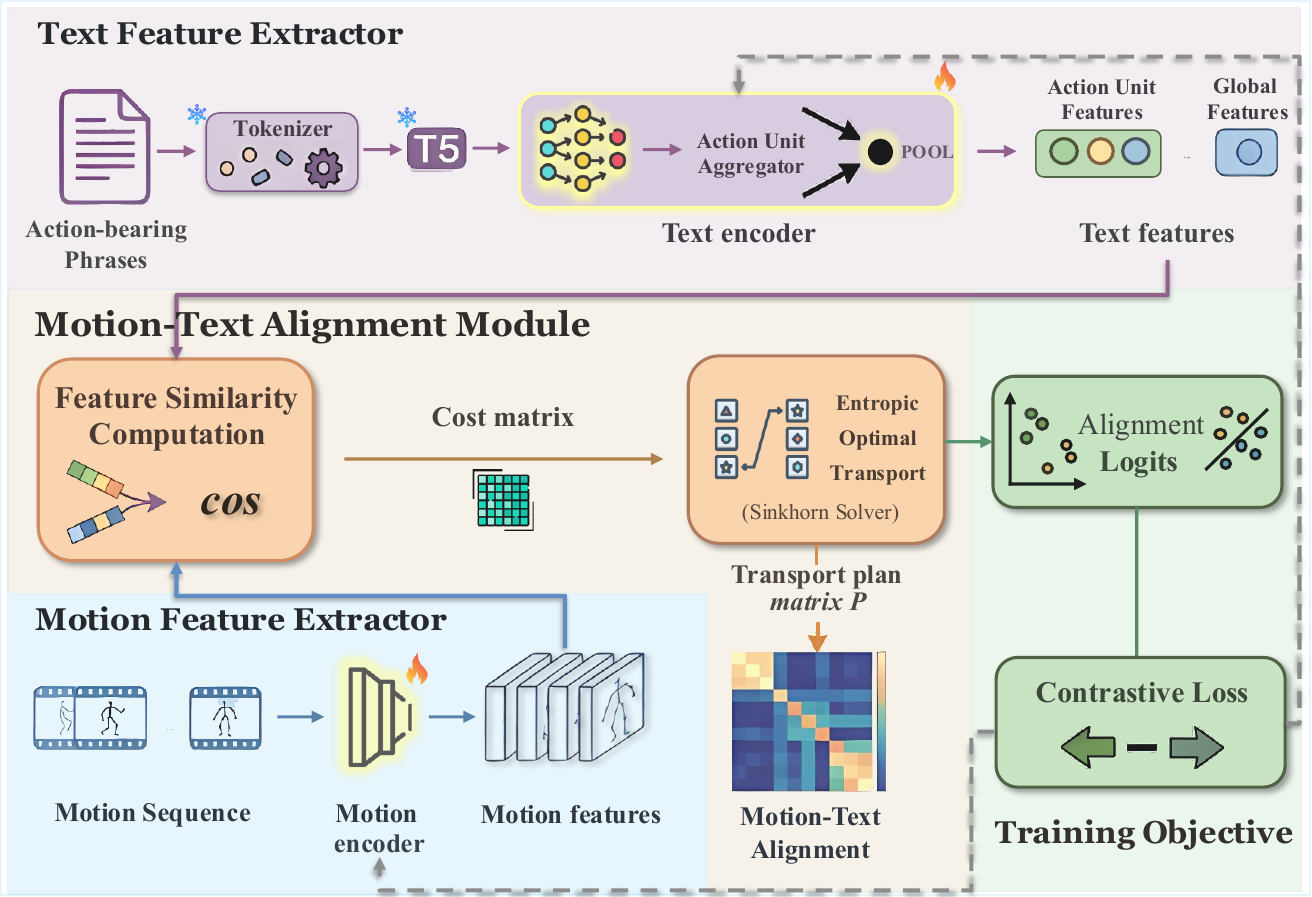}
    \caption{Overview of \ours. The text stream first encodes the action-bearing phrases with a frozen T5 backbone. Then the action unit aggregator aggregates the features into action unit features and the global feature, which form a text feature set. In parallel, the motion stream extracts temporal motion features using a temporal-convolution-based encoder. Given the motion and text features, we compute a feature-similarity-based cost matrix and solve an entropic optimal transport problem with the Sinkhorn algorithm to obtain pairwise alignment logits. These logits are then used in a CLIP-style symmetric contrastive objective for training.}
    \label{fig:pipeline}
    \vspace{-3mm}
\end{figure*}

\subsection{Feature Extraction}
\label{sec:network}
The weakly supervised learning framework is illustrated in \cref{fig:pipeline}. Our architecture adopts a dual-stream design to learn representations from textual descriptions and motion sequences. The original expressive text annotation is first segmented into multiple \textbf{action-bearing phrases} using an LLM (details are in the supplementary material). These phrases are then concatenated into a sequence and fed into the text feature extractor. The text feature extractor $E_t$ leverages a frozen T5-base~\cite{2020t5} backbone to extract latent semantic features, which are subsequently projected into a latent space through an additional MLP. To facilitate temporal alignment, we introduce an \textbf{action unit aggregator} that groups tokens belonging to the same phrase and pools their embeddings into compact \textbf{action unit features}. Some motion segments may not be explicitly described by any action-bearing phrase in the text. To avoid imposing spurious correspondences, we do not force every motion frame to align with an action-unit representation alone and allow a motion frame to correspond to an \texttt{[UNK]} token. We introduce a \textbf{global feature} mechanism that aggregates sequence-level information and provides holistic semantic context for the contrastive objective. This global feature is appended to the action unit features to form the \textbf{text features} $Z^t$ in \cref{alg:ot_motion_text}. On the motion stream $E_m$, we finetune the pre-trained VQ-VAE encoder~\cite{guo2025snapmogen}, composed of temporal convolutional layers, to extract \textbf{motion features} $Z^m$ from motion sequences.

\subsection{Training Objective}
\label{sec:training}
As illustrated in \cref{alg:ot_motion_text}, the training process is conceptually similar to CLIP~\cite{radford2021learning}, in that both methods learn cross-modal representations through batch-wise contrastive supervision over paired motion--text samples. We highlight in green the steps that differ from standard CLIP training.

Given a mini-batch of paired samples
$\mathcal{D}$,
CLIP first computes a global representation for each sample in the two modalities and then forms a batch-wise similarity matrix, where the diagonal entries correspond to matched pairs and the off-diagonal entries correspond to mismatched pairs. A symmetric cross-entropy objective is subsequently applied to encourage high similarity for ground-truth pairs and low similarity for mismatched pairs.

Our training pipeline follows the same high-level paradigm, but differs in how the pairwise alignment score is computed. Instead of directly measuring similarity between two global embeddings, we first compute a fine-grained motion--text alignment cost using the entropy-regularized OT objective in \cref{eq:entropic_ot}. Specifically, for each pair $(m^{(i)}, t^{(j)})$ in the batch, we compute
\begin{equation}
\label{eq:pairwise_eot}
L_{ij} = -\mathcal{J}_{\mathrm{EOT}}(m^{(i)}, t^{(j)}),
\end{equation}
where a larger $L_{ij}$ indicates stronger cross-modal alignment.

Using these pairwise alignment logits, we define the motion-to-text matching probability as
\begin{equation}
p(t^{(j)} \mid m^{(i)})
=
\frac{\exp(L_{ij}/\tau)}
{\sum_{k=1}^{B}\exp(L_{ik}/\tau)},
\end{equation}
and analogously define the text-to-motion matching probability as
\begin{equation}
p(m^{(i)} \mid t^{(j)})
=
\frac{\exp(L_{ij}/\tau)}
{\sum_{k=1}^{B}\exp(L_{kj}/\tau)},
\end{equation}
where $\tau$ is a temperature parameter and $B$ is the batch size.

We then optimize the model using a symmetric contrastive loss:
\begin{equation}
\mathcal{L}_{m2t}
=
-\frac{1}{B}\sum_{i=1}^{B}
\log p(t^{(i)} \mid m^{(i)}),
\end{equation}
\begin{equation}
\mathcal{L}_{t2m}
=
-\frac{1}{B}\sum_{i=1}^{B}
\log p(m^{(i)} \mid t^{(i)}),
\end{equation}
and the final training objective is
\begin{equation}
\label{eq:final_training_loss}
\mathcal{L}
=
\frac{1}{2}\left(\mathcal{L}_{m2t}+\mathcal{L}_{t2m}\right).
\end{equation}

Compared with CLIP, the key difference is that our pairwise logits are not produced by a single global embedding similarity. Instead, they are derived from a fine-grained Sinkhorn-based transport cost between motion frames and text units, allowing the model to capture local temporal correspondence while still benefiting from batch-wise contrastive supervision.

\section{Experiments}

\begin{figure*}
    \centering
    \includegraphics[width=1.05\linewidth]{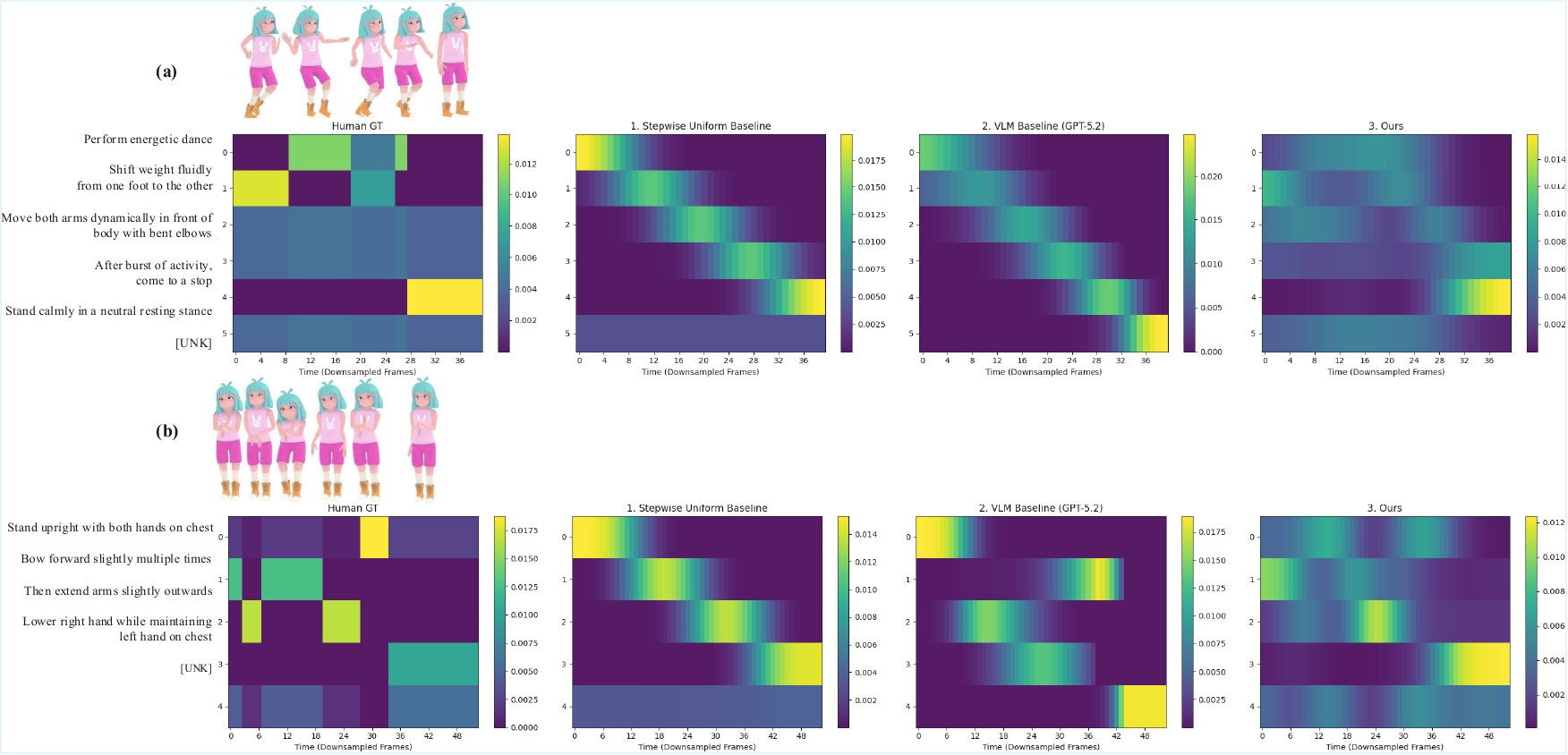}
    \caption{Qualitative comparison with baselines. We plot the ground truth transport plan and those produced by different methods. Our method produces transport plans that are structurally similar to the ground truth.}
    \label{fig:qual1}
    \vspace{-5mm}
\end{figure*}
\subsection{Setting and Implementation Details}
\paragraph{Dataset.} We conduct our experiments on the SnapMoGen dataset~\cite{guo2025snapmogen}, a comprehensive collection of high-fidelity human motion capture data paired with rich textual descriptions. The dataset comprises 34,750 training sequences and 2,012 validation sequences. All motions are originally recorded at 30 FPS. Following SnapMoGen~\cite{guo2025snapmogen}, we downsample the sequences by a factor of 4, resulting in an effective frame rate of 7.5 FPS for both the motion embeddings and the alignment computation. We choose a set of $30$ motion-text pairs from the test set and manually label the ground truth alignment matrix $\mathbf{P}^{gt} \in \mathbb{R}^{T_m \times T_t}$, where $T_m$ and $T_t$ denote the number of downsampled motion frames and atomic action segments, respectively. These manual annotations are used solely for quantitative evaluation and are never used during training. The details of the labeling process are deferred to the supplementary.

\paragraph{Evaluation Metrics.} \label{paragraph:eval_metric}
We evaluate the alignment accuracy by measuring the discrepancy between the transport matrix $\mathbf{P}$ and the ground-truth annotations $\mathbf{P}^{gt}$. Specifically, we compute the normalized $\ell_1$ distance between the two matrices: 
\begin{equation}    
d(\mathbf{P}, \mathbf{P}^{gt}) = \frac{1}{T_m \times T_t} \sum_{i=1}^{T_m} \sum_{j=1}^{T_t} \left| {\mathbf{P}}_{ij} - {\mathbf{P}}^{gt}_{ij} \right|
\end{equation}

where both matrices are first normalized via the Sinkhorn-Knopp algorithm to enforce doubly stochastic constraints. This ensures they satisfy the predefined row and column marginals (i.e., $\sum_j {\mathbf{P}}_{ij} = 1/T_m$ and $\sum_i {\mathbf{P}}_{ij} = 1/T_t$).

\paragraph{Implementation details.}
Our framework is implemented in PyTorch and trained on a single NVIDIA L40 GPU. Training for 500 epochs takes approximately 12 hours. We use AdamW as the optimizer with an initial learning rate of $2 \times 10^{-4}$ and a batch size of 128. For text encoding, we adopt a frozen T5-base backbone followed by five trainable residual MLP blocks, which project the text features into a 512-dimensional embedding space. For Sinkhorn-based alignment, we set the entropy regularization parameter to $\varepsilon=0.1$ and use 50 forward iterations, which we find sufficient for stable convergence in practice. The contrastive temperature is set to $\tau=0.07$. In practice, $T_m$ is typically around 50, while $T_t$ is typically around 5.\\

\subsection{Comparison with Baselines} 
\paragraph{Stepwise Uniform Baseline.} To demonstrate the necessity of dynamic alignment, we construct a stepwise uniform baseline, which inherently assumes a naive, sequential progression between textual actions and motion segments. Specifically, given a motion sequence of $T_m$ frames and $N$ textual actions, this baseline constructs an initial binary matrix by uniformly allocating exactly $\lfloor T_m / N \rfloor$ consecutive frames to each action index in chronological order. Note that the resulting matrix is not strictly block-diagonal; applying the Sinkhorn-Knopp algorithm enforces marginal distribution constraints, which naturally softens the hard boundaries and diffuses the alignment probabilities toward adjacent frames.

\paragraph{VLM Baseline.} To establish a competitive baseline against state-of-the-art vision-language reasoning, we leverage GPT-5.2~\cite{singh2025openai} as a zero-shot temporal segmenter. We provide rendered frames sampled at $2.0$ FPS and segmented action-bearing phrases to the VLM and ask it to predict the alignment. Details are provided in the supplementary material.

\begin{table*}[t]
\centering
\caption{Quantitative results. Values are reported in normalized $\ell_1$ distance ($\times 10^{-2}$), where lower is better. GF indicates whether to use the global feature. MLP Enc. determines whether to use an MLP-based text encoder or a self-attention-based text encoder.} 
\label{tab:alignment_results}
\begin{tabular}{lcccc}
\toprule
Method & GF & MLP Enc. & temperature & Normalized $\ell_1 \downarrow$ \\
\midrule

Stepwise Uniform Baseline & & & & 0.347 \\
VLM Baseline (GPT-5.2) & & & & 0.296 \\

\midrule
Ours (GF + MLP encoder) & \checkmark & \checkmark & 0.07 & \textbf{0.225} \\
Ours (GF + MLP encoder) & \checkmark & \checkmark& 0.01 & 0.230 \\
Ours (GF + MLP encoder) & \checkmark & \checkmark& 0.03 & 0.237 \\
Ours (GF + attention encoder) & \checkmark  & $\times$ & 0.07 & 0.275 \\
Ours (MLP encoder) & $\times$ & \checkmark & 0.07& 0.239 \\

\bottomrule
\end{tabular}
\vspace{-2mm}
\end{table*}





\paragraph{Results.} \Cref{tab:alignment_results} presents the quantitative evaluation of our proposed model. The results demonstrate that \ours achieves a significant relative error reduction over the VLM and stepwise uniform baselines. These results demonstrate the necessity of our dedicated data-driven alignment framework.

\subsection{Ablation} 
\paragraph{Impact of the Global Feature Mechanism.} 
A key component of our alignment module is the global feature, which acts as an \texttt{[UNK]} token to absorb transitional or semantically ambiguous frames. In practice, frames near action boundaries often do not correspond strongly to any specific text token. Without such a token, optimal transport tends to force these frames onto neighboring action tokens, leading to blurred boundaries. During evaluation, we reassign the accumulated probability of the \texttt{[UNK]} token strictly to the preceding valid action token. By introducing the global feature, our model can naturally capture these uncertain frames and preserve cleaner temporal segmentation. As shown in \Cref{tab:alignment_results}, this design consistently improves alignment performance over the variant without a global token.

\paragraph{MLP vs. Attention-based Text Encoders.}
As shown in \cref{tab:alignment_results}, replacing the lightweight MLP with a self-attention-based encoder increases the error. We conjecture that self-attention mixes excessive global context into each action token, blurring semantic boundaries and weakening temporal correspondence. In contrast, the MLP better preserves local phrase-level semantics for precise alignment.

\paragraph{Sensitivity Analysis of Entropic Regularization}
To evaluate the robustness of OT alignment, we conduct a sensitivity analysis on the entropic regularization parameter $\varepsilon$. In the entropic OT framework, $\varepsilon$ controls the trade-off between transport cost minimization and the sparsity of the alignment matrix $P$. As shown in Figure \ref{fig:sensitivity_entropy}, we monitor two primary convergence metrics: 
(1) Marginal Error,
\begin{equation}
E_{marg} = \sum_{i} \left| \sum_{j} P_{ij} - a_i \right| + \sum_{j} \left| \sum_{i} P_{ij} - b_j \right|
\end{equation}
which quantifies the satisfaction of the double-stochasticity constraints in \cref{eq:P_constraint},
and (2) Delta P,
\begin{equation}
\Delta P^{(k)} = || P^{(k)} - P^{(k-1)} ||_1 = \sum_{i} \sum_{j} \left| P_{ij}^{(k)} - P_{ij}^{(k-1)} \right|
\end{equation}
which measures the stability of the transport plan between successive iterations. Our results indicate that setting $\varepsilon = 0.01$ significantly increases the number of iterations required to reach numerical stability. Conversely, $\varepsilon = 0.1$ achieves a balance between alignment precision and training efficiency. We therefore use $50$ iterations in all experiments.

\begin{figure}
    \centering
    \includegraphics[width=1\linewidth]{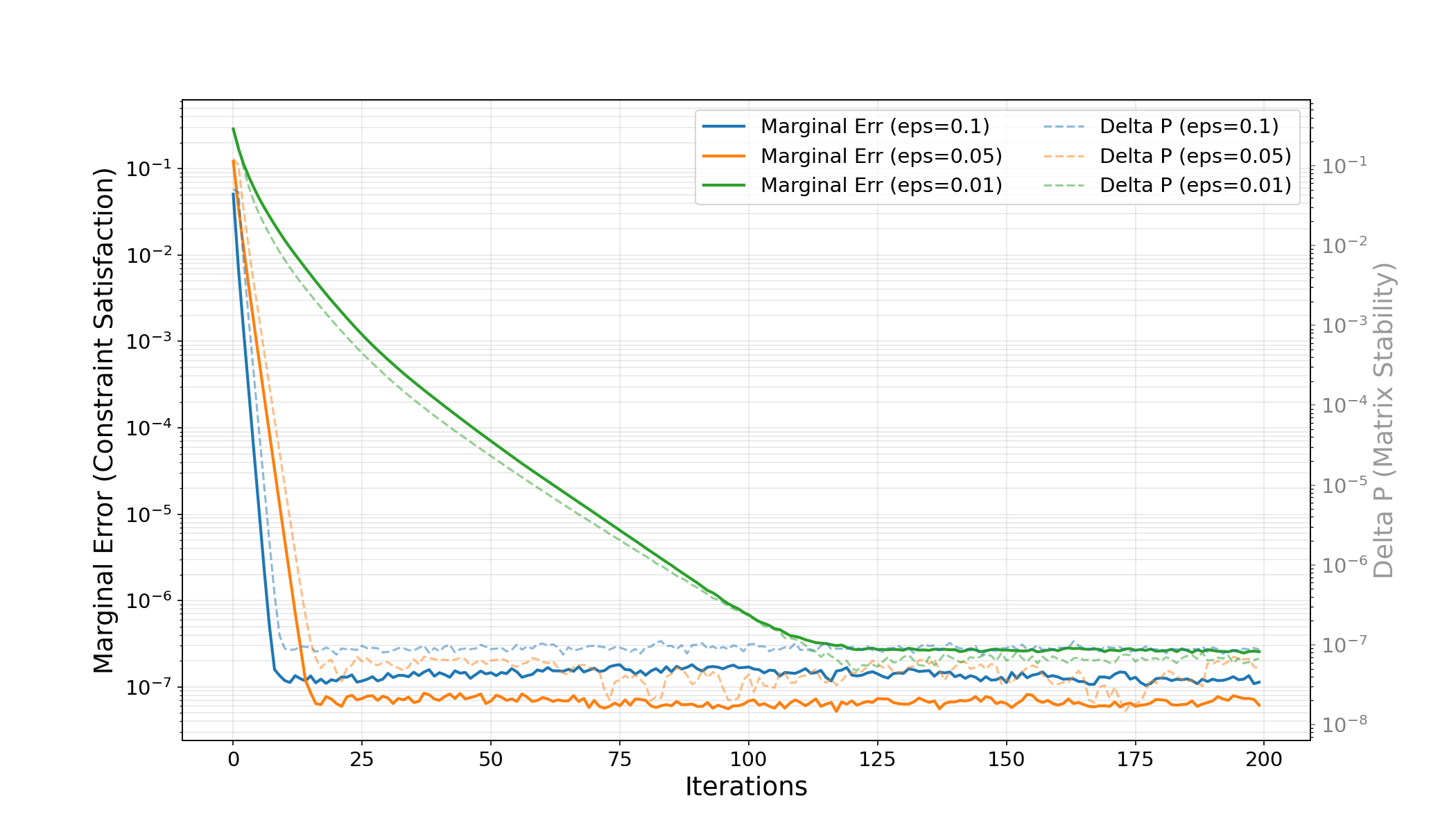}
    \caption{Sinkhorn Convergence Analysis. The plot illustrates the impact of different entropic regularization parameters ($\varepsilon$) on the convergence rates of Marginal Error (representing constraint satisfaction) and Delta P (representing pointwise matrix stability). The results are obtained using a randomly initialized cost matrix $C$.}
    \label{fig:sensitivity_entropy}
    \vspace{-2mm}
\end{figure}

\subsection{Qualitative Results}
Qualitative results of the frame-level text-motion alignment are illustrated in \Cref{fig:qual1}. Compared to the VLM and stepwise uniform baselines, the transport matrix $\mathbf{P}$ generated by our model shows superior structural correspondence with the ground truth $\mathbf{P}^{gt}$. Notably, for captions containing action-bearing phrases that encompass extended temporal spans, our model effectively captures these long-range dependencies. For instance, in sample (a), the first three action phrases in $\mathbf{P}$ demonstrate a relatively uniform probability distribution across the corresponding frames. This highlights the effectiveness of our framework in modeling complex many-to-many alignments between motion and text, a challenging scenario where the VLM baseline falls short.
\vspace{3mm}
\section{Conclusion}
This paper introduced \ours, a weakly supervised approach for recovering fine-grained frame--phrase correspondence from long-form clip-level motion annotations. Our key idea is to cast motion--language alignment as an entropic optimal transport problem, which provides a principled way to model many-to-many correspondence between motion frames and textual units under weak supervision. Combined with LLM-based phrase decomposition and CLIP-style contrastive training, the proposed framework learns temporally grounded motion--text alignment without manual frame-level labels. Experiments show that \ours yields more accurate motion--text grounding than existing baselines. We hope this work encourages future research on fine-grained motion--language supervision.
{
    \small
    \bibliographystyle{ieeenat_fullname}
    \bibliography{main}
}

\clearpage
\setcounter{page}{1}
\maketitlesupplementary

\begin{figure*}[h]
    \centering
    \includegraphics[width=\linewidth]{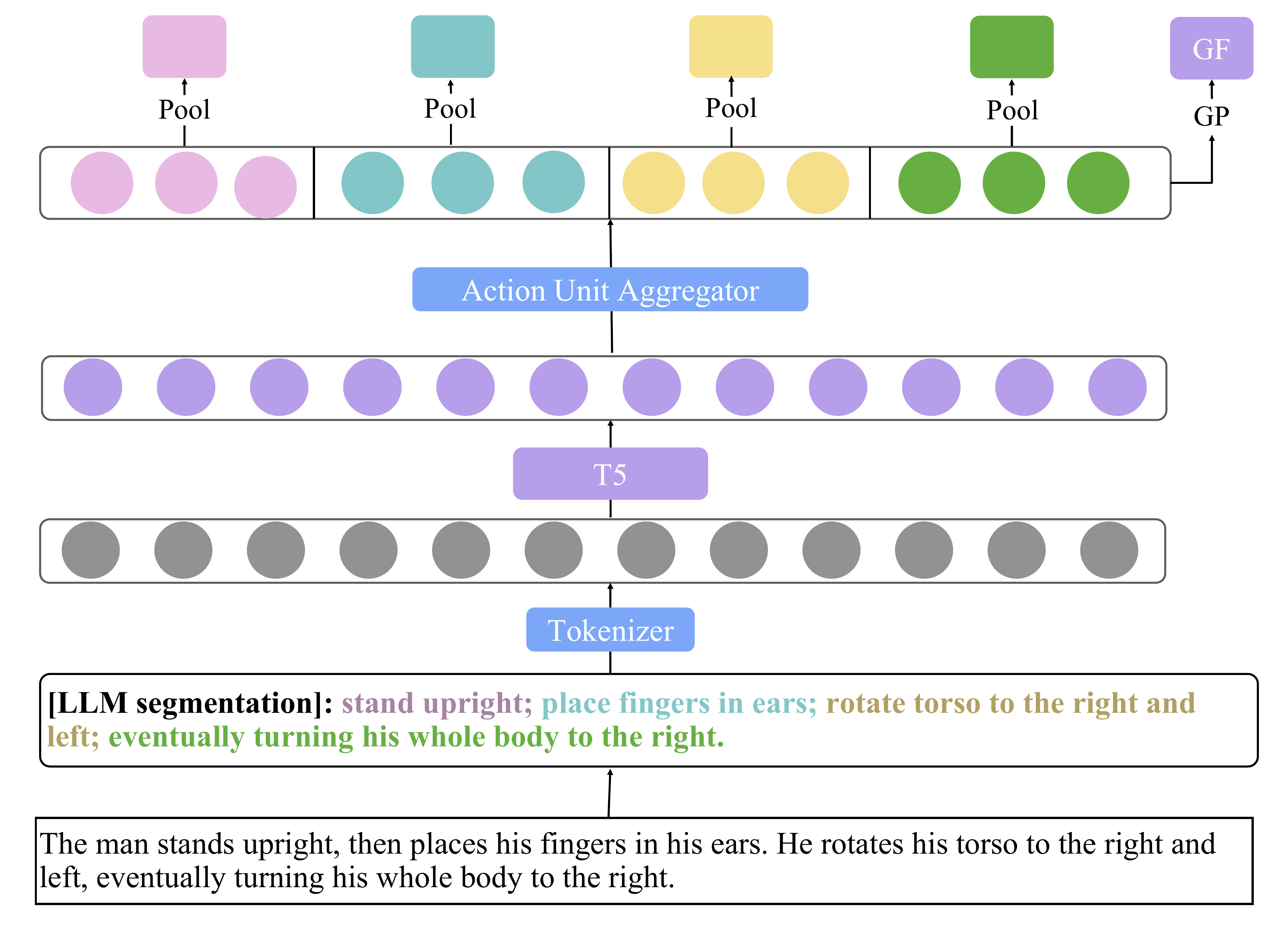}
    \caption{
Overview of the \textbf{action unit aggregator}. 
A long-form motion caption is first segmented into multiple \textbf{action-bearing phrases} using an LLM. 
The concatenated phrase sequence is tokenized and encoded by a frozen T5 backbone to obtain token-level embeddings. 
The action unit aggregator then groups tokens belonging to the same phrase and pools them into compact \textbf{action unit features}. 
An additional \textbf{global feature} (GF) is obtained by global pooling (GP) over the entire sequence, which acts as an \texttt{[UNK]} token to absorb motion frames that are not explicitly described by any phrase.}
    \label{fig:aggre}
\end{figure*}

\section{Proof of Theorem 1}
\begin{theorem}[Zero OT Cost under Perfect Alignment]
\label{thm:zero_ot}
Under Assumption 1, the OT objective $\mathcal{J}_{OT}(m,t)$ admits a feasible transport plan with zero cost. Specifically, the transport plan defined by
\begin{equation}
P_{uv}=\frac{1}{T_m}\mathbbm{1}\{v=\pi(u)\}
\end{equation}
is feasible and achieves
\begin{equation}
\mathcal{J}_{OT}(m,t)=0.
\end{equation}
\end{theorem}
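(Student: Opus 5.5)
The plan is to verify three things in order: that the proposed plan $\mathbf{P}$ lies in $U(\mathbf{a},\mathbf{b})$, that its cost $\langle \mathbf{P},\mathbf{C}\rangle$ is zero, and that no feasible plan can have negative cost. Together these give $\mathcal{J}_{OT}(m,t)=0$, with the minimum in \cref{eq:ot_single} attained at this $\mathbf{P}$.

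\textbf{Feasibility.} Nonnegativity is immediate, since every entry is either $0$ or $1/T_m$. For the row marginals, fix $u$. Exactly one index $v$, namely $v=\pi(u)$, satisfies the indicator, so $\sum_v P_{uv}=1/T_m=a_u$. For the column marginals, fix $v$. Because $\pi$ is a bijection, exactly one $u$ satisfies $\pi(u)=v$, namely $u=\pi^{-1}(v)$. Hence $\sum_u P_{uv}=1/T_m$. Assumption~\ref{asmp:perfect_alignment} gives $T_m=T_t$, so this equals $1/T_t=b_v$, and $\mathbf{P}\in U(\mathbf{a},\mathbf{b})$ as defined in \cref{eq:P_constraint}. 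This is the only step where the equal-length hypothesis and bijectivity are genuinely used, so it is the one to write carefully; the rest is bookkeeping.

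\textbf{Zero cost.} Expand the inner product:
\[
\langle \mathbf{P},\mathbf{C}\rangle=\sum_{u,v}P_{uv}\mathbf{C}_{uv}=\frac{1}{T_m}\sum_{u}\bigl(1-s(z^m_u,z^t_{\pi(u)})\bigr).
\]
Each summand vanishes because the assumption gives $s(z^m_u,z^t_{\pi(u)})=1$. So this plan has cost exactly zero.

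\textbf{Optimality.} The features are $\ell_2$-normalized, so the Cauchy--Schwarz inequality gives $s(\cdot,\cdot)\le 1$. Therefore every entry of $\mathbf{C}$ in \cref{eq:cost_matrix} satisfies $\mathbf{C}_{uv}\ge 0$. Any $\mathbf{P}'\in U(\mathbf{a},\mathbf{b})$ has nonnegative entries, so $\langle \mathbf{P}',\mathbf{C}\rangle\ge 0$, and the minimum is therefore at least zero. Since the plan above achieves zero, $\mathcal{J}_{OT}(m,t)=0$.
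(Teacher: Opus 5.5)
Your proof is correct and follows the same route as the paper's: feasibility from the bijection $\pi$ together with $T_m=T_t$, then zero cost because $\mathbf{C}_{u,\pi(u)}=0$. Your optimality step, which uses $s\le 1$ to get $\mathbf{C}\ge 0$ and hence $\langle \mathbf{P}',\mathbf{C}\rangle\ge 0$ for every feasible plan, makes explicit what the paper only asserts as ``the minimum possible value.''
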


\begin{proof}
Under Assumption 1, for each motion frame $u$, there exists a unique text token $\pi(u)$ such that
$s(z^m_u,z^t_{\pi(u)})=1$.
By Eq. 7, this implies that $\mathbf{C}_{u,\pi(u)}=0$.
Now consider the transport plan
\begin{equation}
P_{uv}=\frac{1}{T_m}\mathbbm{1}\{v=\pi(u)\}.
\end{equation}
Since $\pi$ is a bijection and $T_m=T_t$, each row and each column of $P$ contains exactly one nonzero entry equal to $1/T_m$. Therefore, $P$ satisfies the marginal constraints and is feasible. The total transport cost is
\begin{equation}
\sum_{u=1}^{T_m}\sum_{v=1}^{T_t} P_{uv}\mathbf{C}_{uv}=0,
\end{equation}
which is the minimum possible value. Hence, $\mathcal{J}_{OT}(m,t)=0$.
\end{proof}
\section{Caption Segmentation}
To bridge the gap between long-form narrative descriptions and local temporal motion segments, we first decompose each caption into multiple \textbf{action-bearing phrases} using an LLM-based caption segmentation pipeline, as illustrated in \cref{fig:aggre}. 
Specifically, GPT-4o-mini rewrites each caption into a sequence of semicolon-separated phrases, where each phrase describes a temporally coherent action while preserving the original temporal order. The prompt is illustrated in \cref{fig:segmentation_prompt}.

The segmented phrases are concatenated and processed by a tokenizer and a frozen T5 encoder to obtain token-level embeddings. 
We then apply the \textbf{action unit aggregator}, which identifies token spans corresponding to each phrase using separator tokens (``\texttt{;}'') and the end-of-sequence token. 
For each span, token embeddings are pooled along the sequence dimension to produce compact \textbf{action unit features}.

In addition, we compute a \textbf{global features} by pooling over the entire token sequence. 
This global feature acts as an \texttt{[UNK]} token that absorbs motion frames not explicitly described by any phrase. 
The final text feature set, therefore, consists of the action unit features together with the global feature, which are used for the Sinkhorn-based motion–text alignment described in the main paper.

\begin{figure*}[h]
\begin{mdframed}[backgroundcolor=gray!10, roundcorner=5pt, innertopmargin=10pt, innerbottommargin=10pt, innerrightmargin=10pt, innerleftmargin=10pt]
{\small
\textbf{System Prompt}  \vspace{0.5em}
You are an expert in Motion Capture (MoCap) and computer animation. Your task is to segment complex motion captions into "Atomic Action Units" to facilitate temporal alignment for motion generation models (e.g., SnapMoGen).
 \vspace{0.5em}
 
\textbf{Segmentation Principles:}
\begin{enumerate}
    \item \textbf{Atomization with Temporal Context:} Break down complex sentences into basic, physically continuous atomic actions.
    \item \textbf{Preserve Sequence Markers:} Keep important temporal markers like "then", "soon after", "finally", or "subsequently" at the beginning of descriptions to provide directionality for alignment.
    \item \textbf{Handle Concurrency:} If actions happen simultaneously (e.g., "while", "during"), include concurrency keywords in the description (e.g., "wave right hand while walking") to signify temporal overlap.
    \item \textbf{Body-part Decoupling:} Separate concurrent movements of different body parts if they are distinct actions.
    \item \textbf{Standardization:} Output must be a structured JSON array of objects.
\end{enumerate}  \vspace{0.5em}
\textbf{Output Format Requirement:} Always return a pure JSON object containing a key \texttt{"results"} which is a list of objects, each with \texttt{"caption\_index"} and \texttt{"atomic\_actions"} (list of objects with \texttt{"id"} and \texttt{"description"}).
 
 \vspace{0.5em}
\hrule
 \vspace{0.5em}

\textbf{Few-Shot Examples} \vspace{0.5em}

\textbf{Example Input Captions to Split:}
\begin{itemize}
    \item "The person walks forward a few steps, pauses, and then steps to her left with her left foot, standing still. She raises her arms slightly in front of her at stomach height before lowering them. She repeats the side step with her left foot and turns her head to the left."
    \item "The person stands with hands raised while slowly nodding their head, then starts to move his hands down while tilting his body forward."
    \item "The person advances quickly, bending at the waist during the walk to pick up an object, then finally stands upright while gazing at the object."
\end{itemize}

\textbf{Example Segmentation Results:}
\begin{verbatim}
{
  "results": [
    {
      "caption_index": 1,
      "atomic_actions": [
        {"id": 1, "description": "walk forward a few steps"},
        {"id": 2, "description": "pause and stand still"},
        {"id": 3, "description": "then step to the left with left foot"},
        {"id": 4, "description": "standing still again"},
        {"id": 5, "description": "raise arms slightly to stomach height"},
        {"id": 6, "description": "soon after, lower arms"},
        {"id": 7, "description": "repeat side step with left foot"},
        {"id": 8, "description": "simultaneously turn head to the left"}
      ]
    },
    {
      "caption_index": 2,
      "atomic_actions": [
        {"id": 1, "description": "stand with hands raised"},
        {"id": 2, "description": "slowly nod head while standing"},
        {"id": 3, "description": "then start to move hands down"},
        {"id": 4, "description": "tilt body forward while moving hands"}
      ]
    },
    {
      "caption_index": 3,
      "atomic_actions": [
        {"id": 1, "description": "advance quickly"},
        {"id": 2, "description": "bend at the waist while walking to pick up object"},
        {"id": 3, "description": "then finally stand upright"},
        {"id": 4, "description": "gaze at the object while standing"}
      ]
    }
  ]
}
\end{verbatim}
}
\end{mdframed}
\caption{The complete system prompt and contextual few-shot examples provided to the LLM for atomic action units segmentation.}
\label{fig:segmentation_prompt}
\end{figure*}
\section{Ground Truth Annotation} 
To quantitatively evaluate the temporal alignment performance, we manually label the ground truth alignment matrix $\mathbf{P}^{gt} \in \mathbb{R}^{T_m \times T_t}$. The annotation process involves manual segment-level boundary identification: for each action segment $j$, human annotators specify the temporal interval $[t_{start}, t_{end}]$ in seconds. These intervals are subsequently mapped to discrete frame indices based on the video rendering frame rate. We construct $\mathbf{P}^{gt}$ initially as a binary indicator matrix, where $p_{i,j}=1.0$ if frame $i$ falls within the temporal scope of action $j$. To map it to $U(\mathbf{a},\mathbf{b})$, we apply the Sinkhorn-Knopp algorithm to iteratively normalize the binary matrix. This yields a soft, doubly stochastic probability distribution in $U(\mathbf{a},\mathbf{b})$.

\section{VLM Baseline}
For the VLM baseline, motion videos are first downsampled to a fixed temporal resolution of 2.0~FPS. For motion sequences spanning 30 to 40 seconds, this sampling strategy yields approximately 60 to 80 frames, which provides sufficient visual density to capture macroscopic action transitions. We utilize a structured system prompt that configures the LLM as a ``specialized human motion analyst.'' The model receives the sequence of visual frames alongside the full unsegmented contextual caption and the explicitly parsed action sequence. To mitigate temporal hallucinations, which are prevalent in LLM-based video reasoning, we overlay the original frame indices onto the top-left corner of each sampled image. These indices serve as absolute temporal anchors, enabling the model to ground its predictions in visible spatial markers. Finally, the model outputs the localized start and end frames in a structured JSON format to ensure reliable temporal grounding.

\end{document}